\newtheorem{theorem}{Theorem}[section]
\newtheorem{lemma}[theorem]{Lemma}
\newtheorem{assumption}[theorem]{Assumption}
\theoremstyle{definition}
\theoremstyle{remark}
\title{The Impact of Batch Learning in Stochastic Bandits}
\author{%
  Danil Provodin$^{1,2}$, Pratik Gajane$^1$, Mykola Pechenizkiy$^{1,4}$, Maurits Kaptein$^{2,3}$ \\
  $^1$Eindhoven University of Technology, Eindhoven, The Netherlands\\
  $^2$Jheronimus Academy of Data Science, ‘s-Hertogenbosch, The Netherlands \\
  $^3$Tilburg University, Tilburg, The Netherlands \\
  $^4$University in Jyväskylä, Jyväskylä, Finland \\
  \texttt{ \{d.provodin,p.gajane,m.pechenizkiy\}@tue.nl, M.C.Kaptein@tilburguniversity.edu} \\
}
\begin{document}

\newcommand{\regret}{R}
\newcommand{\horizon}{n}
\newcommand{\horizonn}{n_1}
\newcommand{\horizonnn}{n_2}
\newcommand{\reward}{X}
\newcommand{\vecreward}{\Bar{X}}
\newcommand{\actset}{\mathcal{A}}
\newcommand{\Action}{A}
\newcommand{\action}{a}
\newcommand{\timeidx}{t}
\newcommand{\timeidxx}{s}
\newcommand{\totalreward}{S}
\newcommand{\policy}{\pi}
\newcommand{\dumnpolicy}{\pi^{\prime}}
\newcommand{\armscount}{N}
\newcommand{\rewardof}{r}
\newcommand{\numofactions}{K}
\newcommand{\armval}{\mu}
\newcommand{\bestarm}{\mu^{\ast}}
\newcommand{\bestarmidx}{a^{\ast}}
\newcommand{\suboptgap}{\Delta}
\newcommand{\numbatches}{M}
\newcommand{\batchsize}{b}
\newcommand{\grid}{\mathcal{T}}
\newcommand{\batchidx}{j}
\newcommand{\batchidxoftimeidx}{\batchidx(\timeidx)}
\newcommand{\batchpolicy}{\policy^{\batchsize}}
\newcommand{\history}{H}
\newcommand{\realnumbers}{\mathbb{R}}
\newcommand{\historyset}{\mathcal{H}}
\newcommand{\setofdist}[1]{\mathcal{M}_1{#1}}
\newcommand{\probmeasure}{\mathbb{P}_{\nu, \pi}}
\newcommand{\env}{\nu}
\newcommand{\avpolicy}[1]{\Bar{\policy}_#1}
\newcommand{\avbatchpolicy}[1]{\Bar{\policy}^{\batchsize}_#1}
\newcommand{\avdumnpolicy}[1]{\Bar{\policy}^{\prime}_#1}

\maketitle

\begin{abstract}
We consider a special case of bandit problems, namely batched bandits. Motivated by natural restrictions of recommender systems and e-commerce platforms, we assume that a learning agent observes responses batched in groups over a certain time period. Unlike previous work, we consider a more practically relevant batch-centric scenario of batch learning. We provide a policy-agnostic regret analysis and demonstrate upper and lower bounds for the regret of a candidate policy. Our main theoretical results show that the impact of batch learning can be measured in terms of online behavior. Finally, we demonstrate the consistency of theoretical results by conducting empirical experiments and reflect on the optimal batch size choice.
\end{abstract}

\section{Introduction}

\textbf{Background and motivation.} The multi-armed bandit (MAB) problem is one of the central topics of modern literature on sequential decision making, which aims to determine policies that maximize the expected outcome. These policies are often learned either \textit{online} (sequentially) (see, e.g., \citep{LinUCB_2010, NIPS2017_28dd2c79, dimakopoulou2018estimation}) or \textit{offline} (statically) (see, e.g., \citep{swaminathan15a, zhou2018offline, joachims2018deep, athey2020policy}. In online problems, the agent learns through sequential interaction with the environment adjusting the behavior for every single response. In offline learning on the other hand, the agent learns from a fixed historical data without the possibility to interact with the environment and, therefore, the goal of the agent is to maximally exploit the static data to determine the best policy. However, neither setting provides a close approximation of the underlying reality in many cases. While offline setting is simply not conducive to sequential learning, applicability of online learning is often curtailed by limitations of practical applications. For example, in recommender systems and ad placement engines, treating users one at a time can become a formidable computational burden; in online marketing and clinical trials, environments designs (campaigns/trials) and the presence of delayed feedback result in treating patients/customers organized into groups. In all of these applied cases, it is infeasible to learn one-by-one due to computational complexity or the impact of delay.

Because of the practical restrictions described above, we consider \textit{sequential batch} learning in bandit problems -- sequential interaction with the environment when responses are grouped in batches and observed by the agent only at the end of each batch. Broadly speaking, sequential batch learning is a more generalized way of learning which covers both offline and online settings as special cases bringing together their advantages. 
\begin{itemize}
    \item  Unlike offline learning, sequential batch learning retains the sequential nature of the problem.
    \item Unlike online learning, it is often appealing to implement  batch learning in large scale bandit problems as
        \begin{itemize}
            \item it does not require much engineering efforts and resources, as experimental control over the system is not necessary; and
            \item it does not require resources to make the feedback loop shorter in time-delayed bandit problems.
        \end{itemize}
\end{itemize}

In many application domains, batched feedback is an intrinsic characteristic of the problem \citep{marketing_DP, NIPS2011_e53a0a29, marketing_bandit, Hill_2017}. Although the batch setting is not ubiquitous in the traditional stochastic MAB formulation, there have been several attempts to extend that framework. A more restrictive version of the batch problem, where the agent can choose each arm at most once in a single batch, was studied by \citet{Anantharam1987AsymptoticallyEA}. \citet{pmlr-v51-jun16} generalized the batch problem by introducing budget (how many times each arm can be chosen). However, unlike our case, the authors considered the problem of identifying top-$k$ arms with high probability. The origin of our problem formulation can be traced back to \citet{Perchet_2016}, who proposed an explicit batched algorithm based on explore-then-commit policy for a two-armed batch bandit problem with a relatively small number of batches and explore its upper and lower regret bounds, giving rise to a rich line of work \citet{gao2019batched, esfandiari2020regret, han2020sequential}. The problem of batched bandits is also related to learning from delayed or/and aggregated feedback (see, e.g., \citep{joulani2013online, vernade2017stochastic, pikeburke2018bandits}), since the decision maker is not able to observe rewards in the interim of a batch. Although these are two similar problems, learning from delayed feedback deals with exogenous delays, whereas feedback delays in sequential batch learning are endogenous and arise as a consequence of the batch constraints \citep{han2020sequential}.

Unfortunately, a comprehensive understanding of the effects of the batch setting is still missing. Withdrawing the assumptions of online learning that has dominated much of the MAB literature raises fundamental questions as how to benchmark performance of candidate policies, and how one should choose the batch size for a given policy in order to achieve the rate-optimal regret bounds. As a consequence, it is now frequently a case in practice when the batch size is chosen for the purpose of computational accessibility rather than statistical evidence \citep{NIPS2011_e53a0a29, Hill_2017}. Moreover, while the asymptotics of batched policies are known (see, e.g., \citet{Auer_2010, cesabianchi2013online}),
the relatively small horizon performance of batch policies is much less understood while simultaneously being much more practically relevant. Thus, in this work, we make a significant step in these directions by providing a systematic study of the sequential batch learning problem.

\textbf{Main contribution.} In this paper, we develop a systematic approach to address the challenges mentioned above. First, we formulate a more practically relevant batch-centric problem.
The second dimension of contribution lies in the analysis domain. We provide a refined theoretical analysis and establish upper and lower bounds on the performance for an arbitrary candidate policy. On the modeling side, we demonstrate the validity of the theoretical results by conducting empirical experiments and reflect on the choice of the optimal batch size. Lastly, we present directions for future work.

\section{Problem formulation}
\label{formulation}
We consider a variant of stochastic bandits, which we call \textit{sequential batch learning}. In this problem, the decision-maker (agent) has to make a sequence of decisions (actions), and for each decision it receives a stochastic reward. \footnote{We use the same problem formulation for online setting as described in \citep{lattimore_szepesvari_2020}.} More formally, given a finite set of arms $\actset = \{1, ..., \numofactions\}$, an environment $\env = (P_\action : \action \in \actset)$ ($P_\action$ is the distribution of rewards for action $\action$), and a time horizon $\horizon$, at each time step $\timeidx \in \{1,2, \dots, \horizon\}$, the agent chooses an action $\Action_{\timeidx} \in \actset$ and receives a reward $\reward_{\timeidx} \sim P_{\Action_\timeidx}$. 
Note that we only require the existence of expectation from distributions $P_\action$ for $\action \in \actset$.

\subsection{Batch policies, performance, and regret}

The goal of the agent is to maximize the total reward $\totalreward_{\horizon} = \sum_{\timeidx=1}^{\horizon} \reward_{\timeidx}$. Let $\armval_\action (\env) = \int_{\realnumbers} x d P_\action(x)$ be the expected reward of action $\action \in \actset$ and $\bestarm (\env) = \max_{\action \in \actset} \armval_\action (\env)$ be the best action. We consider a classical performance measure of the agent -- regret,  which is the difference between the players' rewards after $\horizon$ rounds and the best reward possible given the strategy of playing a single arm:

\begin{equation*}
\label{eq:regret definition}
    \regret_\horizon (\policy, \env) = \horizon \mu^*(\env) - \mathbb{E} \big [ \totalreward_{\horizon} \big ].
\end{equation*}
    
Throughout the work we assume that $\env$ is an arbitrary but fixed environment, therefore we will often drop the dependence on $\env$ in various quantities.
    
A policy is a rule that describes how the actions should be taken in light of the past. Here, the past at time step $\timeidx > 0$ is defined as

\begin{equation*}
    \history_{\timeidx} = (\Action_1, \reward_1, ... , \Action_{\timeidx}, \reward_{\timeidx}) \in (\actset \times \realnumbers)^{\timeidx} \equiv \historyset_{\timeidx}
\end{equation*}

which is the sequence of action-reward pairs leading up to the state of the process at the previous time step $\timeidx-1$. Note that $\history_0 = \emptyset$. Let $\setofdist(X)$ be a set of all probability distributions over a finite set $X$. As such, a policy is a finite sequence $\policy = (\policy_{\timeidx})_{1 \leq \timeidx \leq \horizon}$ of maps of histories to distributions over actions, formally, $\policy_\timeidx : \historyset_{\timeidx-1} \xrightarrow{} \setofdist(\actset)$. Intuitively, following a policy $\policy$ means that in time step $\timeidx>0$, the distribution of the action $\Action_\timeidx$ to be chosen for that timestep is $\policy_{\timeidx}(\history_{\timeidx-1})$: the probability that $\Action_{\timeidx} = a$ is $\policy_{\timeidx}(\history_{\timeidx-1})(\action)$. Since writing $\policy_{\timeidx}(\history_{\timeidx-1})(\action)$ is quite cumbersome, we abuse notation and will write $\policy_{\timeidx}(\action | \history_{\timeidx-1})$. Thus, when following a policy $\policy$, in time step $\timeidx$ we get that

\begin{equation*}
    \probmeasure(\Action_{\timeidx} = \action | \history_{\timeidx-1}) = \policy_{\timeidx}(\action | \history_{\timeidx-1}).
\end{equation*}

In contrast to conventional approaches that require the reward to be observable after each choice of the arm, our setting assumes only that rewards are released after $b$ consecutive iterations. Denote by $\grid = \timeidx_1,...,\timeidx_{\numbatches}$ a grid, which is a division of the time horizon $\horizon$ to $\numbatches$ batches of equal size $\batchsize$,  $1 = \timeidx_1 < ... < \timeidx_{\numbatches} = \horizon, \timeidx_{\batchidx} - \timeidx_{\batchidx-1} = \batchsize$ for all $\batchidx=1,..., \numbatches$. Without loss of generality we assume that $\horizon=\batchsize \numbatches$, otherwise we can take $\horizon:= \left\lfloor \frac{\horizon}{\batchsize} \right\rfloor \batchsize $. Recall that in the batch setting the agent receives the rewards after a batch ends, meaning that the agent operates with the same amount of information within a single batch. For simplicity, we assume that as long as the history remains the same the decision rule does not change as well. Note that this assumption does not impose any significant restrictions. Indeed, instead of applying a policy once, one can always do it $\batchsize$ times until the history updates. Thus, a batch policy is also a finite sequence of $\policy = (\policy_{\timeidx})_{1 \leq \timeidx \leq \horizon}$ of maps of histories to distributions over actions: $\policy_\timeidx : \historyset_{\timeidx-1} \xrightarrow{} \setofdist(\actset)$. However, not the whole past history is available for the agent in timestep $\timeidx$, formally, $\history_\timeidx = \history_{\timeidx_{\batchidx}}$ for any $t_j < t \leq t_{j+1}$. Despite the batch setting is a property of the environment, we consider this limitation from a policy perspective. With this, we assume that it is not the online agent who works with the batch environment, but the batch policy interacts with the online environment. To distinguish between online and batch policies we will denote the last as $\batchpolicy = (\batchpolicy_{\timeidx})_{1 \leq \timeidx \leq \horizon}$.

\subsection{Additional assumptions}
Before proceeding, we will define a binary relation on a set of policies. We say that the decision rule  $\policy_\timeidx = \policy_\timeidx(\cdot|\history_{\timeidx-1})$ is not worse than the decision rule $\policy_\timeidx^{\prime} = \policy_\timeidx^{\prime}(\cdot|\history_{\timeidx-1})$ (and write $\policy_\timeidx \geq \policy_\timeidx^{\prime}$) if the expected reward under policy $\policy_\timeidx$ is not less than the expected reward under policy $\policy_\timeidx^{\prime}$: 

\begin{equation}
    \label{better policy}
    \sum_{\action \in \actset} \armval_\action \policy_\timeidx(\action|\history_{\timeidx-1}) \geq \sum_{\action \in \actset} \armval_\action \policy_\timeidx^{\prime}(\action|\history_{\timeidx-1}).
\end{equation}

If $\geq$ can be replaced by $>$ in (\ref{better policy}), we say that the decision rule $\policy_\timeidx$ is better than the decision rule $\policy_\timeidx^{\prime}$ (and write $\policy_\timeidx > \policy_\timeidx^{\prime}$). Denote by $\sigma_\timeidx^2 = (\sigma_{\action, \timeidx}^2)_{\action \in \actset}$ the vector of conditional variances of estimated rewards $(\hat{\armval}_{\action, \timeidx})_{\action \in \actset}$ given the history $\history_{\timeidx-1}$:  $\sigma_{\action, \timeidx}^2 = Var (\hat{\armval}_{\action, \timeidx} | \history_{\timeidx-1})$, where $\hat{\armval}_{\action, \timeidx} = \mathbb{E}(\armval_\action | \history_{\timeidx-1})$.

For building up to the proof of the main result, we also need to require some properties of policy $\policy$ that would distinguish "good" policies from the rest. First, we require that if we have two vectors of conditional variances associated with two different histories, then the decision rule based on history with lower variances (over the best arms) is better than the decision rule based on history with higher variances.  

\begin{assumption}
\label{variance_contractions}
    Let $\sigma_\timeidx^2$ and $(\sigma_\timeidx^2)^{\prime}$ be two sets of variances associated with histories $\history_{\timeidx-1}$ and $\history^{\prime}_{\timeidx-1}$, respectively. If  $\sigma_\timeidx^2 \leq_{\env} (\sigma_\timeidx^2)^{\prime}$, then $\policy_\timeidx(\cdot|\history_{\timeidx-1}) \geq \policy_\timeidx(\cdot|\history^{\prime}_{\timeidx-1})$, where $\leq_{\env}$ is an elementwise comparison over the elements $\bestarmidx \in \arg \max_\actset \armval_\action$.
\end{assumption}

Next, we assume that policy $\policy = (\policy_{\timeidx})_{1 \leq \timeidx \leq \horizon}$ \textit{improves over time} if the "rate" of increasing of the regret decreases.

\begin{assumption}
\label{policy_improvement}
    $\frac{\regret_{\horizonn}(\policy)}{\horizonn} > \frac{\regret_{\horizonnn}(\policy)}{\horizonnn}$ for all $\horizonn, \horizonnn$, $1 \leq \horizonn < \horizonnn \leq \horizon$.
\end{assumption}

Intuitively, if an online policy $\policy$ is not good enough (e.g. it makes a lot of suboptimal choices), then an online "short" policy could perform better as it omits these suboptimal choices. Indeed, using Assumption \ref{policy_improvement} for horizons $\numbatches$ and $\horizon$, we have $\frac{\regret_\numbatches(\policy)}{\numbatches} > \frac{\regret_\horizon(\policy)}{\horizon}$, and, therefore, multiplying by $\horizon$ and $\numbatches$ respectively and using $\horizon = \numbatches \batchsize$, we get $b M \regret_\numbatches(\policy)  > M \regret_\horizon(\policy)$. Finally, dividing by $\numbatches$ we get $\batchsize \regret_\numbatches(\policy) > \regret_\horizon(\policy)$.

The following lemma provides properties of a policy that improves over time.
\begin{lemma}
\label{lemma_propeties}
Let $\policy = ( \policy_\timeidx )_{1 \leq \timeidx \leq \horizon}$ be a  policy such that Assumption \ref{policy_improvement} holds. Then,
    \begin{enumerate}
        \item 
        \label{point1}
        $\Bar{\policy}_{\horizonnn} > \Bar{\policy}_{\horizonn}$, where $\Bar{\policy}_{\timeidx} = \frac{\sum_{\timeidxx=1}^{\timeidx} \policy_\timeidxx}{\timeidx}$ is an average decision rule;
        \item 
        \label{point2}
        $\policy_{\timeidx} > \Bar{\policy}_{\timeidx}$ $\forall \timeidx$ such that $1 \leq \timeidx \leq \horizon$,
    \end{enumerate}

where $\policy_\timeidx + \policy_\timeidxx$ is an elementwise addition of two probability vectors for some $\timeidx, \timeidxx$.
\end{lemma}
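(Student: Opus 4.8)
The plan is to reduce both statements to Assumption \ref{policy_improvement} by passing from the decision rules themselves to the scalar \emph{expected per-step reward} they induce. First I would note that the ordering in (\ref{better policy}) compares two rules through the quantity $\sum_{\action \in \actset} \armval_\action \policy_\timeidx(\action|\history_{\timeidx-1})$, which is linear in the probability vector $\policy_\timeidx$. Taking expectations over the (random) history, set $\rewardof(\policy_\timeidxx) = \mathbb{E}\big[\sum_{\action \in \actset} \armval_\action \policy_\timeidxx(\action|\history_{\timeidxx-1})\big] = \mathbb{E}[\reward_\timeidxx]$. Since $\Bar{\policy}_\timeidx = \frac{1}{\timeidx}\sum_{\timeidxx=1}^{\timeidx} \policy_\timeidxx$ is an elementwise average of probability vectors, linearity yields $\rewardof(\Bar{\policy}_\timeidx) = \frac{1}{\timeidx}\sum_{\timeidxx=1}^{\timeidx} \rewardof(\policy_\timeidxx)$. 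Combining this with $\regret_\timeidx(\policy) = \sum_{\timeidxx=1}^{\timeidx} (\bestarm - \rewardof(\policy_\timeidxx))$ gives the single identity on which everything rests:
\begin{equation*}
\rewardof(\Bar{\policy}_\timeidx) = \bestarm - \frac{\regret_\timeidx(\policy)}{\timeidx}.
\end{equation*}

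For Point \ref{point1}, I would observe that $\Bar{\policy}_{\horizonnn} > \Bar{\policy}_{\horizonn}$ means, in terms of the functional above, $\rewardof(\Bar{\policy}_{\horizonnn}) > \rewardof(\Bar{\policy}_{\horizonn})$. Substituting the identity, this is equivalent to $\bestarm - \regret_{\horizonnn}(\policy)/\horizonnn > \bestarm - \regret_{\horizonn}(\policy)/\horizonn$, i.e. to $\regret_{\horizonn}(\policy)/\horizonn > \regret_{\horizonnn}(\policy)/\horizonnn$, which is exactly Assumption \ref{policy_improvement} for $\horizonn < \horizonnn$.

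For Point \ref{point2}, I would combine the per-step decomposition $\rewardof(\policy_\timeidx) = \bestarm - \big(\regret_\timeidx(\policy) - \regret_{\timeidx-1}(\policy)\big)$ with the identity for $\rewardof(\Bar{\policy}_\timeidx)$. Writing $a_\timeidx = \regret_\timeidx(\policy)/\timeidx$, the target inequality $\rewardof(\policy_\timeidx) > \rewardof(\Bar{\policy}_\timeidx)$ unwinds to $\regret_\timeidx(\policy) - \regret_{\timeidx-1}(\policy) < a_\timeidx$; since $\regret_\timeidx(\policy) - \regret_{\timeidx-1}(\policy) = \timeidx a_\timeidx - (\timeidx-1)a_{\timeidx-1}$, cancelling $a_\timeidx$ and dividing by $\timeidx-1$ reduces this to $a_\timeidx < a_{\timeidx-1}$, again Assumption \ref{policy_improvement} applied to the consecutive horizons $\timeidx-1 < \timeidx$. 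The only case requiring separate treatment is $\timeidx = 1$, where $\Bar{\policy}_1 = \policy_1$ and the division by $\timeidx-1$ is illegitimate; there the statement holds only with equality, so I would expect the intended range to be $2 \leq \timeidx \leq \horizon$.

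I expect the main obstacle to be conceptual rather than computational. The relation $\geq$ in (\ref{better policy}) is defined for two rules conditioned on the \emph{same} fixed history, whereas $\Bar{\policy}_\timeidx$ mixes rules $\policy_1,\dots,\policy_\timeidx$ that act on \emph{different}, and random, histories. The delicate step is therefore to justify extending the ordering to these averaged history-dependent objects via the expectation functional $\rewardof(\cdot)$, and to confirm that their comparison is faithfully captured by comparing the scalars $\rewardof(\cdot)$. Once that identification is in place, the remaining algebra is immediate.
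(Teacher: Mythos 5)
Your proposal is correct and follows essentially the same route as the paper: both reduce the policy ordering to the scalar expected per-step reward, use the identity $\sum_\action \armval_\action \Bar{\policy}_\timeidx(\action) = \bestarm - \regret_\timeidx(\policy)/\timeidx$ for Point \ref{point1}, and reduce Point \ref{point2} to Assumption \ref{policy_improvement} applied to consecutive horizons (the paper's subtraction of $\regret_{\timeidx+1}(\policy)/\timeidx$ is algebraically the same step as your cancellation of $a_\timeidx$). Your observation that $\timeidx=1$ must be excluded from Point \ref{point2} is also consistent with the paper, whose argument only establishes the strict inequality for timesteps $\timeidx+1$ with $\timeidx < \horizon$, i.e.\ from the second step onward.
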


\begin{proof}
    1. First, we need to show that $\Bar{\policy}_{\timeidx}$ is a decision rule for some $\timeidx$, i.e. $\sum_{\action \in \actset} \Bar{\policy}_{\timeidx}(\action) = 1$ and $\Bar{\policy}_{\timeidx}(\action) \geq 0$ for all $\action \in \actset$. Indeed,
    \begin{equation*}
        \sum_{\action \in \actset} \Bar{\policy}_{\timeidx}(\action) = \sum_{\action \in \actset} \frac{\sum_{\timeidxx=1}^{\timeidx} \policy_\timeidxx (\action)}{\timeidx} = \frac{\sum_{\timeidxx=1}^{\timeidx} \sum_{\action \in \actset} \policy_\timeidxx (\action)}{\timeidx} = \frac{\sum_{\timeidxx=1}^{\timeidx} 1}{\timeidx} = 1.
    \end{equation*}
    
    Since $\policy_\timeidxx (\action) \geq 0$ for all $\action \in \actset$ and for all $1 \leq \timeidxx \leq \timeidx $, $\Bar{\policy}_{\timeidx}(\action) \geq 0$ for all $\action \in \actset$.
    
    Next, we convert $\mathbb{E} [ \totalreward_{\horizon} ]$ into the sum over timesteps and actions:
    \begin{align*}
        \mathbb{E}  [ \totalreward_{\horizon}  ] & = \mathbb{E} \left[ \sum_\timeidx \reward_\timeidx \right] = \mathbb{E} \left[ \sum_{\timeidx} \sum_{\action} \reward_\timeidx \mathbb{I} \{ \Action_\timeidx = \action \} \right] = \mathbb{E} \left[ \sum_\timeidx \sum_\action \mathbb{E} [ \reward_\timeidx \mathbb{I} \{ \Action_\timeidx = \action \} | \Action_\timeidx ] \right] \\
        & = \mathbb{E} \left[ \sum_\timeidx \sum_\action \armval_{\Action_\timeidx} \mathbb{I} \{ \Action_\timeidx = \action \} \right] = \sum_\timeidx \sum_a \armval_\action \mathbb{P}_{\env, \policy} ( \Action_\timeidx = \action ) = \sum_\timeidx \sum_\action \armval_\action \policy_\timeidx ( \action | \history_{\timeidx-1} ).
    \end{align*}
    
    Fix $\horizonn, \horizonnn: \horizonn < \horizonnn$. We have 
    $\frac{\regret_{\horizonn}(\policy)}{\horizonn} > \frac{\regret_{\horizonnn}(\policy)}{\horizonnn}$. Expressing the regret by its definition, one can get $\frac{\horizonn \bestarm - \mathbb{E} [\totalreward_{\horizonn}]}{\horizonn} > \frac{\horizonnn \bestarm - \mathbb{E} [\totalreward_{\horizonnn}]}{\horizonnn}$, and hence $\frac{\mathbb{E} [\totalreward_{\horizonnn}]}{\horizonnn} - \frac{\mathbb{E} [\totalreward_{\horizonn}]}{\horizonn} > 0$.
    
    Finally,
    
    \begin{equation*}
        \frac{\mathbb{E} [\totalreward_{\horizonnn}]}{\horizonnn} - \frac{\mathbb{E} [\totalreward_{\horizonn}]}{\horizonn} = \frac{\sum_{\timeidx=1}^{\horizonnn} \sum_\action \armval_\action \policy_\timeidx ( \action | \history_{\timeidx-1} )}{\horizonnn} - \frac{\sum_{\timeidx=1}^{\horizonn} \sum_\action \armval_\action \policy_\timeidx ( \action | \history_{\timeidx-1} )}{\horizonn} > 0.
    \end{equation*}
    
    The result is completed by rearranging the sums and using the definition of $\Bar{\policy}_{\horizonn}, \Bar{\policy}_{\horizonnn}$.
    
    2. For $\timeidx<\horizon$ we have $ \frac{ \regret_{\timeidx}(\policy)}{\timeidx} > \frac{ \regret_{\timeidx+1}(\policy)}{\timeidx+1} $. By subtracting $\frac{ \regret_{\timeidx+1}(\policy)}{\timeidx}$ from both sides we get:
    \begin{align*}
        \frac{ \regret_{\timeidx}(\policy) - \regret_{\timeidx+1}(\policy)}{t} & > \frac{\timeidx \regret_{\timeidx+1}(\policy) - (\timeidx+1) \regret_{\timeidx+1}(\policy)}{\timeidx(\timeidx+1)}, \\
        \frac{-(\bestarm - \reward_{\timeidx+1})}{\timeidx} & > \frac{-\regret_{\timeidx+1}(\policy)}{(\timeidx+1)\timeidx}, \\
        \bestarm - \reward_{\timeidx+1} & < \frac{\regret_{\timeidx+1}(\policy)}{\timeidx+1}, \\
        \bestarm - \sum_\action \armval_\action \policy_{\timeidx+1}(\action) & < \bestarm - \sum_\action \armval_\action \Bar{\policy}_{\timeidx+1}(\action), \\
        \sum_\action \armval_\action \policy_{\timeidx+1}(\action) & > \sum_\action \armval_\action \Bar{\policy}_{\timeidx+1}(\action).
    \end{align*}
    
    Here, in forth step we used that $\frac{\sum_{\action \in \actset}\sum_{\timeidxx=1}^{\timeidx+1} \policy_\timeidxx (\action)}{\timeidx+1} = \sum_{\action \in \actset} \Bar{\policy}_{\timeidx+1}(\action)$.
\end{proof}

\section{Lower and upper bounds}
We next provide lower and upper bounds on the best achievable performance. Intuitively, if policy $\policy$ is "good enough" then it should be better than policy $\batchpolicy$ because it makes its decisions ("good decisions") having the maximum available information about the environment. In some sense, we can consider a batch policy $\batchpolicy$ as a "slow" version of policy $\policy$. Nevertheless, batch policy $\batchpolicy$ is a batch specification of "good" policy $\policy$, and we can expect that it performs relatively well. The following theorem formalizes the described above intuition.

\begin{theorem}
\label{thm1}
    Let $\pi^b$ be a batch specification of a given policy $\pi$ and $\numbatches = \frac{\horizon}{\batchsize}$. Suppose that assumptions \ref{variance_contractions} and \ref{policy_improvement} hold. Then, for $b>1$,
    
    \begin{equation}
    \label{main_thm}
        \regret_\horizon(\policy) < \regret_\horizon(\batchpolicy) \leq \batchsize \regret_\numbatches(\policy).
    \end{equation}
\end{theorem}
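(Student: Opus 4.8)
The plan is to prove the two inequalities in~(\ref{main_thm}) separately, after rewriting every regret as a sum of per-round expected-reward gaps. Exactly as in the proof of Lemma~\ref{lemma_propeties}, I would set $\regret_\horizon(\policy)=\sum_{\timeidx=1}^{\horizon}\big(\bestarm-\rewardof_\timeidx(\policy)\big)$ with $\rewardof_\timeidx(\policy)=\sum_{\action\in\actset}\armval_\action\,\mathbb{P}_{\env,\policy}(\Action_\timeidx=\action)$ the expected reward collected in round $\timeidx$, and use the analogous expansions for $\batchpolicy$ on horizon $\horizon$ and for $\policy$ on the short horizon $\numbatches$. The whole argument then reduces to comparing these per-round rewards, which is precisely what Assumption~\ref{variance_contractions} is designed to do once the two histories being compared are identified.

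For the left inequality I would fix a round $\timeidx$ belonging to batch $\batchidx$ (so $\timeidx_\batchidx<\timeidx\le \timeidx_{\batchidx+1}$) and compare the two rules acting there: $\policy$ uses the full history $\history_{\timeidx-1}$, whereas $\batchpolicy$ is frozen on $\history_{\timeidx_\batchidx}$, a sub-history of $\history_{\timeidx-1}$ carrying fewer observations. Writing $\sigma_\timeidx^2$ and $(\sigma_\timeidx^2)'$ for the conditional-variance vectors attached to $\history_{\timeidx-1}$ and $\history_{\timeidx_\batchidx}$ respectively, more data cannot raise the variances of the optimal-arm estimates, so $\sigma_\timeidx^2\le_\env(\sigma_\timeidx^2)'$ and Assumption~\ref{variance_contractions} gives $\policy_\timeidx(\cdot|\history_{\timeidx-1})\ge\policy_\timeidx(\cdot|\history_{\timeidx_\batchidx})$, i.e.\ $\rewardof_\timeidx(\policy)\ge\rewardof_\timeidx(\batchpolicy)$. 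Summing over $\timeidx$ yields $\regret_\horizon(\policy)\le\regret_\horizon(\batchpolicy)$; the strict inequality is inherited from any round with $\timeidx>\timeidx_\batchidx+1$ — which exists because $\batchsize>1$ — at which the extra within-batch observations strictly shrink the relevant variances, under a strict reading of Assumption~\ref{variance_contractions}.

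The right inequality $\regret_\horizon(\batchpolicy)\le\batchsize\,\regret_\numbatches(\policy)$ is the substantive part, and here I would group the horizon-$\horizon$ regret of $\batchpolicy$ by batches, $\regret_\horizon(\batchpolicy)=\sum_{\batchidx=1}^{\numbatches}\sum_{\timeidx\in\text{batch }\batchidx}\big(\bestarm-\rewardof_\timeidx(\batchpolicy)\big)$, and pair batch $\batchidx$ of $\batchpolicy$ with round $\batchidx$ of the horizon-$\numbatches$ run of $\policy$. Both invoke the same decision rule, but $\batchpolicy$ in batch $\batchidx$ acts on a history built from $(\batchidx-1)\batchsize$ earlier rounds while $\policy$ at round $\batchidx$ acts on a history of only $\batchidx-1$ rounds; the former is therefore weakly more informed, its optimal-arm variances are weakly smaller, and Assumption~\ref{variance_contractions} makes the per-round reward of $\batchpolicy$ throughout batch $\batchidx$ at least $\rewardof_\batchidx(\policy)$. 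Since each batch repeats this rule $\batchsize$ times, summing the $\numbatches$ batches gives $\regret_\horizon(\batchpolicy)\le\batchsize\,\regret_\numbatches(\policy)$.

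The hard part is making the variance comparison of the upper bound rigorous: the horizon-$\horizon$ run of $\batchpolicy$ and the horizon-$\numbatches$ run of $\policy$ are distinct stochastic processes that pull arms along different trajectories, so ``observed more rounds'' must genuinely be turned into the elementwise order $\le_\env$ over the optimal arms that Assumption~\ref{variance_contractions} requires, and the decision-rule indices on the fine and coarse grids must be aligned — this bookkeeping, together with securing the strict inequality on the left, is where I expect the real effort. As a consistency check, the chain is compatible with the remark following Assumption~\ref{policy_improvement}, which already gives $\batchsize\,\regret_\numbatches(\policy)>\regret_\horizon(\policy)$, so that $\regret_\horizon(\policy)<\regret_\horizon(\batchpolicy)\le\batchsize\,\regret_\numbatches(\policy)$ sits correctly between the two online quantities.
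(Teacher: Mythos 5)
Your overall shape is right---decompose the regrets into per-round expected-reward gaps, interpret $\batchsize\,\regret_\numbatches(\policy)$ as a ``short'' policy replayed on the fine grid, and compare decision rules via Assumption~\ref{variance_contractions}---but there are two genuine gaps. The first is that you treat the history $\history_{\timeidx_\batchidx}$ used by $\batchpolicy$ as a sub-history of the history $\history_{\timeidx-1}$ used by $\policy$, so that ``more data cannot raise the variances.'' These are histories of two \emph{different} stochastic processes that pulled different arms over rounds $1,\dots,\timeidx_\batchidx$; neither is contained in the other, and having observed more rounds does not by itself give the elementwise ordering $\le_\env$ over the optimal arms. You correctly flag this as ``the hard part'' for the right inequality but then leave it unresolved, and you do not notice that the left inequality suffers from exactly the same defect. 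The paper closes this gap by an induction over batches: it first shows (Step 1) that the \emph{average} decision rules within each earlier batch are ordered, then (Step 2) converts that ordering into an ordering of the expected number of times the best arm has been rewarded, $\sum_l \mathbb{E}[T_{\bestarmidx}(\policy,l)] > \sum_l \mathbb{E}[T_{\bestarmidx}(\batchpolicy,l)] \ge \sum_l \mathbb{E}[T_{\bestarmidx}(\dumnpolicy,l)]$, which yields the variance ordering needed for Assumption~\ref{variance_contractions}. That conversion uses $\numofactions=2$ in an essential way (for two arms, a larger expected reward forces more mass on the best arm); the paper explicitly restricts the proof to $\numofactions=2$, a restriction your argument never engages with.

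The second gap is the strict inequality $\regret_\horizon(\policy) < \regret_\horizon(\batchpolicy)$. You try to extract it from ``a strict reading of Assumption~\ref{variance_contractions},'' but that assumption only ever delivers $\ge$; no strictness is available from it. In the paper, strictness comes from Assumption~\ref{policy_improvement} via Lemma~\ref{lemma_propeties}: within a batch the online policy strictly improves on its own running average ($\Bar{\policy}_{\timeidx_\batchidx,\timeidx} > \policy_{\timeidx_\batchidx}$ for $\timeidx > \timeidx_\batchidx+1$), while $\batchpolicy$ and $\dumnpolicy$ are frozen at $\policy_{\timeidx_\batchidx}$, which is exactly where $\batchsize>1$ enters. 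Your left-inequality argument never invokes Assumption~\ref{policy_improvement} at all, so even granting the variance comparison you would only obtain $\regret_\horizon(\policy) \le \regret_\horizon(\batchpolicy)$.
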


We can consider the term $\batchsize \regret_\numbatches(\policy)$ as $b$ similar online independent agents operating in the same environment by following policy $\policy$ but over a shorter horizon $M$. However, we can also think of it as the performance of a batch policy. Indeed, imagine that this is one agent that deliberately repeats each step $b$ times instead of immediately updating its beliefs and proceeding to the next round as in an online manner. Then, after $b$ repetitions (after a batch ends), it updates its beliefs using only the first reward from the previous batch. So, while this policy could perform as an online policy within these repetitions, it pretends that rewards are not observable and acts using the outdated history (just like a batch policy does). For notational simplicity, we denote the online "short" policy as $\dumnpolicy$. Thus, the theorem allows to benchmark any batch policy in terms of its online analogs. Note that there is no any behavioral difference between the original policy $\pi$ and the "short" policy $\dumnpolicy$.

\section{Regret analysis}
\label{theory}

\subsection{Sketch of the proof}
\label{sketch}
First, we analyze the performance of the three policies $\policy$, $\batchpolicy$, and $\dumnpolicy$ within a batch. We show that if at the beginning of a batch the online policy $\policy$ is not worse than the batch policy $\batchpolicy$, which in turn is not worse than the online "short" policy $\dumnpolicy$, then, by the end of the batch, we get the corresponding regrets being related reversely. Then, by Assumption \ref{variance_contractions} we establish that the transition from the end of one batch to the beginning of the next batch retains the relation between policies.

\textbf{Within batch.} Lets consider the timestep $\timeidx_{\batchidx}$ which is the beginning of the $j$-th batch. Assume that in timestep $\timeidx_{\batchidx}$ the decision rule of the online policy is not worse that than the decision rule of the batch policy, which is not worse than the decision rule of the online "short" policy, $\policy_{\timeidx_\batchidx} \geq \batchpolicy_{\timeidx_\batchidx} \geq \dumnpolicy_{\timeidx_\batchidx}$. Since the online policy $\policy$ improves within a batch (by Assumption \ref{policy_improvement}) and both the batch policy $\batchpolicy$ and online "short" policy $\dumnpolicy$ remain the same within a batch (by the definition of batch policy), by applying Lemma \ref{lemma_propeties} we have $\policy_{\timeidx} > \batchpolicy_{\timeidx} \geq \dumnpolicy_{\timeidx}$ for any timestep $\timeidx_{\batchidx} < \timeidx < \timeidx_{\batchidx+1}$. So, starting at the beginning of a batch with $\policy_{\timeidx_\batchidx} \geq \batchpolicy_{\timeidx_\batchidx} \geq \dumnpolicy_{\timeidx_\batchidx}$ leads us to $\policy_{\timeidx_{\batchidx+1}-1} > \batchpolicy_{\timeidx_{\batchidx+1}-1} \geq \dumnpolicy_{\timeidx_{\batchidx+1}-1}$ at the end of the batch.

\textbf{Between batches.} 
Let $\history_{\timeidx_{\batchidx}-1}$, $\history^\batchsize_{\timeidx_{\batchidx}-1}$, $\history^{\prime}_{\timeidx_{\batchidx}-1}$ be histories collected by policies $\policy$, $\batchpolicy$, $\dumnpolicy$ by the timestep $\timeidx_{\batchidx}$, correspondingly. Let $\sigma_{\timeidx_{\batchidx}}^2$, $(\sigma_{\timeidx_{\batchidx}}^2)^{\batchsize}$, $(\sigma_{\timeidx_{\batchidx}}^2)^{\prime}$ be the corresponding conditional variances. Transiting from timestep $\timeidx_{\batchidx}-1$ to timestep $\timeidx_{\batchidx}$ leads us to a new batch, and all policies are being updated in this step. To show that the relation between batch policy and its bounds holds, we need to analyze how much information was gained by all three policies, namely, we need to show that $\sigma_{\timeidx_{\batchidx}}^2 \leq_{\env} (\sigma_{\timeidx_{\batchidx}}^2)^{\batchsize} \leq_{\env} (\sigma_{\timeidx_{\batchidx}}^2)^{\prime}$. Then, we can just apply Assumption $\ref{variance_contractions}$.

We now exploit the fact that policies $\batchpolicy$ and $\dumnpolicy$ are specifications of online policy $\policy$. This means that whatever exploration-exploitation scheme is implemented in policy $\policy$, policies $\batchpolicy$ and $\dumnpolicy$ mimic the same scheme but in slower manner. In other words, ones policy $\policy$ reaches a particular configuration $\sigma^2$, policies $\batchpolicy$ and $\dumnpolicy$ reach the same configurations $(\sigma^2)^{\batchsize}$, $(\sigma^2)^{\prime}$ much later in time. Similarly, policy $\dumnpolicy$ is a slower version of policy $\batchpolicy$. As a result, for an arbitrary timestep $t$ we have $\sigma_{\timeidx_{\batchidx}}^2 \leq_{\env} (\sigma_{\timeidx_{\batchidx}}^2)^{\batchsize} \leq_{\env} (\sigma_{\timeidx_{\batchidx}}^2)^{\prime}$.

\subsection{Proof of Theorem \ref{main_thm}}
\label{proof}
The sketch of the proof works for an arbitrary number of actions K but, to bring it to life, one needs to show that online policy cannot pull as many "bad" arms as batch and "short" online policies do. Below we provide the formal proof for the case $\numofactions = 2$ and we leave for future work the case $\numofactions>2$.

\textbf{Step 1 (Within batch).} Fix $\batchidx \geq 1$. Let $\policy_{\timeidx_\batchidx} \geq \batchpolicy_{\timeidx_\batchidx} \geq \dumnpolicy_{\timeidx_\batchidx}$. Define an average decision rule between timesteps $\timeidx_1$ and $\timeidx_2$ as $\Bar{\policy}_{\timeidx_1, \timeidx_2} = \frac{\sum_{\timeidxx=\timeidx_1}^{\timeidx_2-1} \policy_\timeidxx}{\timeidx_2 - 1 - \timeidx_1 }$; and an average decision rule in batch $\batchidx$ as $\avpolicy{\batchidx} = \frac{\sum_{\timeidxx=\timeidx_\batchidx}^{\timeidx_{\batchidx+1}-1} \policy_\timeidxx}{b }$. By the end of batch $j$ have:

\begin{align*}
    \Bar{\policy}_{\timeidx_\batchidx,\timeidx} \stackrel{(a)}{>} \Bar{\policy}_{\timeidx_\batchidx, \timeidx_\batchidx + 1} = \policy_{\timeidx_\batchidx} \geq & \batchpolicy_{\timeidx_\batchidx} \stackrel{(b)}{=} \batchpolicy_{\timeidx} \text{,  and} \\
    \\
    & \batchpolicy_{\timeidx_\batchidx} \geq \dumnpolicy_{\timeidx_\batchidx} \stackrel{(c)}{=} \dumnpolicy_{\timeidx},
\end{align*}

for any timestep $\timeidx_{\batchidx} < \timeidx < \timeidx_{\batchidx+1}$, where $(a)$ follows from Lemma \ref{lemma_propeties} (\ref{point1}); and $(b)$ and $(c)$ hold by the definition of batch policy. Thus, starting with $\policy_{\timeidx_\batchidx} \geq \batchpolicy_{\timeidx_\batchidx} \geq \dumnpolicy_{\timeidx_\batchidx}$ at the beginning of batch $\batchidx$ leads us to $\avpolicy{\batchidx} > \avbatchpolicy{\batchidx} \geq \avdumnpolicy{\batchidx}$. Moreover, by Lemma \ref{lemma_propeties} (\ref{point2}), we have $\policy_{\timeidx_{\batchidx + 1} - 1} > \batchpolicy_{\timeidx_{\batchidx + 1} - 1} \geq \dumnpolicy_{\timeidx_{\batchidx + 1} - 1}$. 

\textbf{Step 2 (Between batches).} Assume that $\numofactions = 2$ and fix $\batchidx \geq 1$. Let $\avpolicy{{l}} > \avbatchpolicy{{l}} \geq \avdumnpolicy{{l}}$ for any batch $1 \leq l < j$. Let $\history_{\timeidx_{\batchidx}-1}$, $\history^\batchsize_{\timeidx_{\batchidx}-1}$, $\history^{\prime}_{\timeidx_{\batchidx}-1}$ be histories collected by policies $\policy$, $\batchpolicy$, $\dumnpolicy$ by the timestep $\timeidx_{\batchidx}$, correspondingly. Let $\sigma_{\timeidx_{\batchidx}}^2$, $(\sigma_{\timeidx_{\batchidx}}^2)^{\batchsize}$, $(\sigma_{\timeidx_{\batchidx}}^2)^{\prime}$ be the corresponding conditional variances. Define a number of times we have received a reward from arm $\action$ \footnote{Usually, it is defined as  a number of times action $\action$ has been played but, since the policy $\dumnpolicy$ plays more arms than receives rewards, we define it that way.} in batch $\batchidx$ by policy $\policy$ as $T_a(\policy, \batchidx) = \sum_{\timeidxx=\timeidx_\batchidx}^{\timeidx_{\batchidx+1}-1} \mathbb{I} \{\Action_\timeidx = \action\}$. Note that $\mathbb{E} [T_a(\policy, \batchidx)] =  b \cdot \avpolicy{{\batchidx}}(a) $. Since $\numofactions = 2$, $\avpolicy{{l}} > \avbatchpolicy{{l}} \geq \avdumnpolicy{{l}}$ implies $\avpolicy{{l}}(\bestarmidx) > \avbatchpolicy{{l}}(\bestarmidx) \geq \avdumnpolicy{{l}}(\bestarmidx)$ for $1 \leq l < j$. Hence, $\mathbb{E}[T_{\bestarmidx}(\policy, l)] > \mathbb{E}[T_{\bestarmidx}(\batchpolicy, l)] \geq \mathbb{E}[T_{\bestarmidx}(\dumnpolicy, l)]$ for $1 \leq l < j$ and, therefore, $\sum_l \mathbb{E}[T_{\bestarmidx}(\policy, l)] > \sum_l \mathbb{E}[T_{\bestarmidx}(\batchpolicy, l)] \geq \sum_l \mathbb{E}[T_{\bestarmidx}(\dumnpolicy, l)]$. In such case, we have $\sigma_{\timeidx_{\batchidx}}^2 \leq_{\env} (\sigma_{\timeidx_{\batchidx}}^2)^{\batchsize} \leq_{\env} (\sigma_{\timeidx_{\batchidx}}^2)^{\prime}$, since the variance is inversely proportional to the number of rewards has been received. By applying Assumption \ref{variance_contractions} we have that $\policy_{\timeidx_{\batchidx}} > \batchpolicy_{\timeidx_{\batchidx}} \geq \dumnpolicy_{\timeidx_{\batchidx}}$.

\textbf{Step 3 (Regret throughout the horizon).} We assume that the interaction begins with the online policy, batch policy and "short" online policy being equal to each other: $\policy_{1} = \batchpolicy_{1} = \dumnpolicy_{1}$. Then, from Step 1, by the end of the first batch we have $\avpolicy{1} > \avbatchpolicy{1} \geq \avdumnpolicy{1}$ and $\policy_{\timeidx_{2} - 1} > \batchpolicy_{\timeidx_{2} - 1} \geq \dumnpolicy_{\timeidx_{2} - 1}$. Next, from Step 2, the transition to the second batch retains the relation between policies: $\policy_{\timeidx_{2}} > \batchpolicy_{\timeidx_{2}} \geq \dumnpolicy_{\timeidx_{2}}$; and so on. Finally, summing over $\numbatches=\frac{\horizon}{\batchsize}$ batches we have:

\begin{align*}
    \regret_\horizon(\policy) & = \horizon \mu^* - \mathbb{E} \left[ \sum_{\timeidx=1}^{\horizon} \reward_{\timeidx} \right] \\
    & = \sum_{\batchidx=1}^{\numbatches} \left( \batchsize \mu^* - \mathbb{E} \left[ \sum_{\timeidx=\timeidx_\batchidx}^{\timeidx_{\batchidx+1}-1} \reward_{\timeidx} \right] \right) \\
    & = \sum_{\batchidx=1}^{\numbatches} \left( \batchsize \mu^* - \batchsize \sum_a\avpolicy{{j}}(\action) \armval_\action \right) &&< \sum_{\batchidx=1}^{\numbatches} \left( \batchsize \mu^* - \batchsize \sum_a\avbatchpolicy{{j}}(\action) \armval_\action \right) = \regret_\horizon(\batchpolicy) \\
    & =  \batchsize \sum_{\batchidx=1}^{\numbatches} \left( \mu^* -  \sum_a\avbatchpolicy{{j}}(\action) \armval_\action \right) &&\leq \batchsize \sum_{\batchidx=1}^{\numbatches} \left( \mu^* -  \sum_a\avdumnpolicy{{j}}(\action) \armval_\action \right) = \batchsize \regret_\numbatches(\policy).
\end{align*}

This concludes the proof.

\section{Empirical analysis}
\label{empirical}
We perform experiments on two different applications: on simulated stochastic bandit environments; and on a contextual bandit environment, learned from the logged data in an online marketing campaign. We examine the effect of batch learning on Thompson Sampling (TS) and Upper Confidence Bound (UCB) policies for the stochastic problems, and linear Thompson Sampling (LinTS) \citep{NIPS2011_e53a0a29} and linear Upper Confidence Bound (LinUCB) \citep{LinUCB_2010} for the contextual problem.

\textbf{Simulated environments.} We present some simulation result for the Bernoulli bandit problem. We create 3 environments with different mean rewards for each $\numofactions = 2$ and $\numofactions = 4$. Table \ref{env-table} summarizes information about vectors of mean rewards used in the environments.

\begin{table}
  \caption{Description of the environments.}
  \label{env-table}
  \centering
  \begin{tabular}{llll}
    \toprule
    \multicolumn{2}{c}{2-arm environments} & \multicolumn{2}{c}{4-arm environments} \\
    \midrule
    Name     & Rewards     & Name & Rewards  \\
    \cmidrule(r){1-2} 
    \cmidrule(r){3-4}
    $env1$ & [0.7, 0.5]  & $env4$ &   [0.35, 0.18, 0.47, 0.61]   \\
    $env2$ & [0.7, 0.4]  & $env5$ &   [0.40, 0.75, 0.57, 0.49]   \\
    $env3$ & [0.7, 0.1]  & $env6$ &   [0.70, 0.50, 0.30, 0.10]   \\
    \bottomrule
  \end{tabular}
\end{table}

\textbf{Real data.} We also consider batch learning in a personalized marketing campaign on the KPN \footnote{a dutch telecommunications provider} logged dataset. KPN has recently used 3 different campaigns that all aimed to sell an extra broadband subscription to their customers. In the current dataset, a randomly selected set of customers received one of the three campaigns randomly. The data contains a sample of a campaign selection from October 2019 until June 2020 combined with customer information.  We adopt an unbiased offline evaluation method of \citep{Li_2011} to compare various bandit algorithms and batch size values. We use conversion rate (CR) as the metric of interest, which is defined as the ratio between the number of successful interactions and the total number of interactions. To protect business-sensitive information, we only report relative conversion rate, therefore Figure \ref{cr-plot} demonstrates the CR returned by the off-policy evaluation algorithm with hidden y-axis values.

\textbf{Results.} \footnote{The source code of the experiments can be found in \url{https://github.com/danilprov/batch-bandits}.} Figure \ref{regret-plot} shows the regret of two algorithms in six environments across batch size. As expected, the regret has an upward trend as batch size increases. It can be seen that, for both algorithms, the impact of batch learning depends on environment parameters: the bigger a suboptimality gap, the stronger the impact of batching.

While the exact behavior of the batch learning is still not understood (and it pretty much depends on an algorithm itself rather than the experiment design), we can see a clear difference between a randomised policy and a deterministic one. Indeed, TS turns out to be more robust to the impact of batching, whereas UCB algorithm fluctuates considerably as batch size increases. The results for the real data confirms this fact as well: from Figure \ref{cr-plot} we observe that the impact of batching is milder for randomised policy (LinTS) than for deterministic policy (LinUCB) in contextual problem.

It is important to note that both experiments demonstrate results consistent with the theoretical analysis conducted in Section \ref{theory}. As the upper bound in Theorem \ref{thm1} suggests, the performance metric reacts evenly to the increasing/decreasing of the batch size. Thus, the conducted analysis guarantees that the choice of the batch size should be rather based on computational capabilities or other features of the problem. Finally, although it was not an initial goal of the study, we recommend to resort to randomised policies when it becomes necessary to learn in batches.

\begin{figure}
\centering     
\subfigure[2-arm environments performance.]{\label{fig:b}\includegraphics[width=60mm]{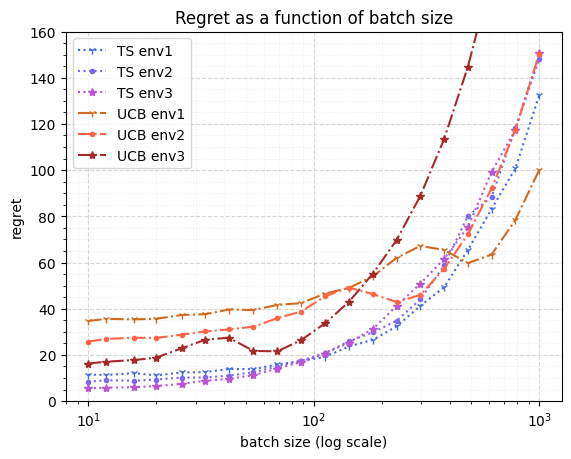}}
\subfigure[4-arm environments performance.]{\label{fig:a}\includegraphics[width=60mm]{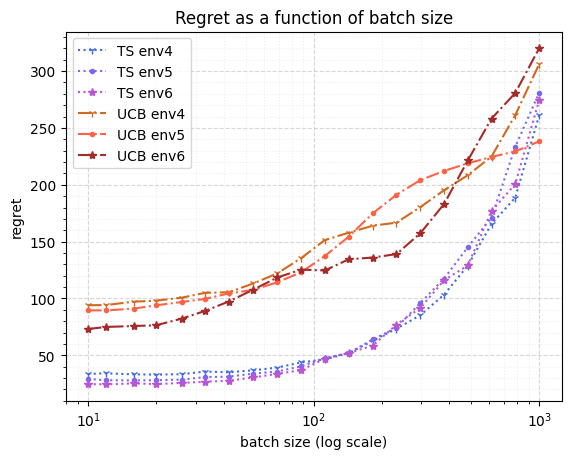}}
\caption{Empirical regret performance by batches. The plots are averaged over 500 repetitions.}
\label{regret-plot}
\end{figure}

\begin{figure}
\centering     
{\includegraphics[width=60mm]{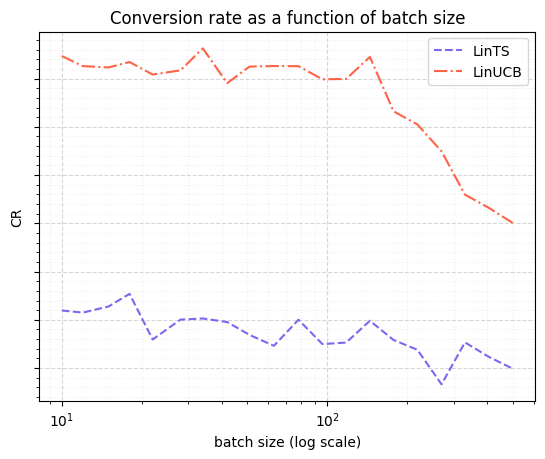}}
\caption{Empirical conversion rate by batches. The plots are averaged over 20 repetitions.}
\label{cr-plot}
\end{figure}

\section{Discussion and further directions}
Batch learning is an integral part of any learning system, and the understanding of its impact is of a first magnitude. In this study, we have presented a systematic approach for batch learning in stochastic bandits. Formulating the problem from a more practically relevant perspective, we have shown the true effect of batch learning by conducting a comprehensive theoretical analysis, which is confirmed by our strong empirical results.

Practically speaking, we have investigated one component of the performance-computational cost trade-off and demonstrated that it deteriorates gradually depending on the batch size. Thus, in order to find a suitable batch size, practitioners should take the necessary steps to estimate the second component (computational costs) based on computational capabilities.

Concerning future work, the first obvious step is to extend proof of the Theorem \ref{thm1} to $\numofactions > 2$  case. 
It is also crucial to identify a policy class $\Pi$ satisfying Assumption \ref{variance_contractions}. 
Further natural direction is to extend the provided theoretical guarantees to a wider class of MAB problems (e.g., contextual and non-stationary cases). While our formulation focuses on stochastic bandits, our established bounds should extend naturally for contextual problems as well. In fact, the main technical contribution does not rely on any properties of stochastic bandits. Finally, as we mentioned earlier, randomised policies seem to be more robust to the batch learning than deterministic policies. An interesting future direction, therefore, is to provide theoretical comparison of batch learning impact on randomised and deterministic policies separately.

\acksection
We would like to thank the anonymous reviewers for their thoughtful and helpful suggestions and comments. This project is partially financed by the Dutch Research Council (NWO) and the ICAI initiative in collaboration with KPN, the Netherlands.

\bibliography{main}
\bibliographystyle{plainnat}

\newpage
\iftoggle{long-version}{
\appendix

\section{Appendix}

\subsection{Relation between lower and upper bounds}
\label{ap_low_up_bound}

In this section, we discuss how reasonable Assumption \ref{policy_improvement} is. Intuitively, if an online policy $\policy$ is not good enough (e.g. it makes a lot of suboptimal choices), then an online "short" policy could perform better as it omits these suboptimal choices. Indeed, using Assumption \ref{policy_improvement}, we can show that $\batchsize \regret_\numbatches(\policy) > \regret_\horizon(\policy)$:

\begin{align*}
    \frac{\regret_\numbatches(\policy)}{\numbatches} & > \frac{\regret_\horizon(\policy)}{\horizon}, \\
    n \regret_\numbatches(\policy) & > M \regret_\horizon(\policy), \\
    b M \regret_\numbatches(\policy) & > M \regret_\horizon(\policy).
\end{align*}

Since $n > M$, the above chain of inequalities suggests that online "short" policy performs worse than online policy.

\subsection{Proof of lemma \ref{lemma_propeties}}
\begin{enumerate}
    \item First, we need to show that $\Bar{\policy}_{\timeidx}$ is a decision rule for some $\timeidx$, i.e. $\sum_{\action \in \actset} \Bar{\policy}_{\timeidx}(\action) = 1$ and $\Bar{\policy}_{\timeidx}(\action) \geq 0$ for all $\action \in \actset$. Indeed,
    \begin{equation*}
        \sum_{\action \in \actset} \Bar{\policy}_{\timeidx}(\action) = \sum_{\action \in \actset} \frac{\sum_{\timeidxx=1}^{\timeidx} \policy_\timeidxx (\action)}{\timeidx} = \frac{\sum_{\timeidxx=1}^{\timeidx} \sum_{\action \in \actset} \policy_\timeidxx (\action)}{\timeidx} = \frac{\sum_{\timeidxx=1}^{\timeidx} 1}{\timeidx} = 1.
    \end{equation*}
    
    Since $\policy_\timeidxx (\action) \geq 0$ for all $\action \in \actset$ and for all $1 \leq \timeidxx \leq \timeidx $, $\Bar{\policy}_{\timeidx}(\action) \geq 0$ for all $\action \in \actset$.
    
    Next, we convert $\mathbb{E} [ \totalreward_{\horizon} ]$ into the sum over timesteps and actions:
    \begin{align*}
        \mathbb{E}  [ \totalreward_{\horizon}  ] & = \mathbb{E} \big [ \sum_\timeidx \reward_\timeidx \big ] = \mathbb{E} \big [ \sum_{\timeidx} \sum_{\action} \reward_\timeidx \mathbb{I} \{ \Action_\timeidx = \action \} \big ] = \mathbb{E} \big [ \sum_\timeidx \sum_\action \mathbb{E} [ \reward_\timeidx \mathbb{I} \{ \Action_\timeidx = \action \} | \Action_\timeidx ] \big ] \\
        & = \mathbb{E} \big [ \sum_\timeidx \sum_\action \armval_{\Action_\timeidx} \mathbb{I} \{ \Action_\timeidx = \action \} \big ] = \sum_\timeidx \sum_a \armval_\action \mathbb{P}_{\env, \policy} ( \Action_\timeidx = \action ) = \sum_\timeidx \sum_\action \armval_\action \policy_\timeidx ( \action | \history_{\timeidx-1} ).
    \end{align*}
    
    Fix $\horizonn, \horizonnn: \horizonn < \horizonnn$. We have 
    $\frac{\regret_{\horizonn}(\policy)}{\horizonn} > \frac{\regret_{\horizonnn}(\policy)}{\horizonnn}$. Expressing the regret by its definition, one can get $\frac{\horizonn \bestarm - \mathbb{E} [\totalreward_{\horizonn}]}{\horizonn} > \frac{\horizonnn \bestarm - \mathbb{E} [\totalreward_{\horizonnn}]}{\horizonnn}$, and hence $\frac{\mathbb{E} [\totalreward_{\horizonnn}]}{\horizonnn} - \frac{\mathbb{E} [\totalreward_{\horizonn}]}{\horizonn} > 0$.
    
    Finally,
    
    \begin{equation*}
        \frac{\mathbb{E} [\totalreward_{\horizonnn}]}{\horizonnn} - \frac{\mathbb{E} [\totalreward_{\horizonn}]}{\horizonn} = \frac{\sum_{\timeidx=1}^{\horizonnn} \sum_\action \armval_\action \policy_\timeidx ( \action | \history_{\timeidx-1} )}{\horizonnn} - \frac{\sum_{\timeidx=1}^{\horizonn} \sum_\action \armval_\action \policy_\timeidx ( \action | \history_{\timeidx-1} )}{\horizonn} > 0.
    \end{equation*}
    
    The result is completed by rearranging the sums and using the definition of $\Bar{\policy}_{\horizonn}, \Bar{\policy}_{\horizonnn}$.
    
    \item For $\timeidx<\horizon$ we have $ \frac{ \regret_{\timeidx}(\policy)}{\timeidx} > \frac{ \regret_{\timeidx+1}(\policy)}{\timeidx+1} $. By subtracting $\frac{ \regret_{\timeidx+1}(\policy)}{\timeidx}$ from both sides we get:
    \begin{align*}
        \frac{ \regret_{\timeidx}(\policy) - \regret_{\timeidx+1}(\policy)}{t} & > \frac{\timeidx \regret_{\timeidx+1}(\policy) - (\timeidx+1) \regret_{\timeidx+1}(\policy)}{\timeidx(\timeidx+1)} \\
        \frac{-(\bestarm - \reward_{\timeidx+1})}{\timeidx} & > \frac{-\regret_{\timeidx+1}(\policy)}{(\timeidx+1)\timeidx} \\
        \bestarm - \reward_{\timeidx+1} & < \frac{\regret_{\timeidx+1}(\policy)}{\timeidx+1} \\
        \bestarm - \sum_\action \armval_\action \policy_{\timeidx+1}(\action) & < \bestarm - \sum_\action \armval_\action \Bar{\policy}_{\timeidx+1}(\action) \\
        \sum_\action \armval_\action \policy_{\timeidx+1}(\action) & > \sum_\action \armval_\action \Bar{\policy}_{\timeidx+1}(\action)
    \end{align*}
    
    Here, in forth step we used that $\frac{\sum_{\action \in \actset}\sum_{\timeidxx=1}^{\timeidx+1} \policy_\timeidxx (\action)}{\timeidx+1} = \sum_{\action \in \actset} \Bar{\policy}_{\timeidx+1}(\action)$.
\end{enumerate}
}{}
\end{document}